\documentclass[12pt,a4paper]{amsart}

\usepackage[T1]{fontenc}
\usepackage{lmodern}
\usepackage{mathtools,amssymb}
\usepackage{booktabs}
\usepackage{float}
\usepackage{needspace}
\usepackage[margin=2.9cm]{geometry}
\usepackage[numbers,sort&compress]{natbib}
\usepackage{hyperref}
\usepackage[nameinlink,noabbrev]{cleveref}

\hypersetup{
  hidelinks,
  pdftitle={Continuity-Free Near-Minimax Leading-Order Regret for CVaR-UCBVI},
  pdfauthor={Yuanlong Chen}
}

\numberwithin{equation}{section}

\theoremstyle{plain}
\newtheorem{theorem}{Theorem}[section]
\newtheorem{lemma}[theorem]{Lemma}
\newtheorem{proposition}[theorem]{Proposition}
\newtheorem{corollary}[theorem]{Corollary}
\theoremstyle{definition}

\newtheorem{example}[theorem]{Example}

\newcommand{\cvar}{\operatorname{CVaR}}
\newcommand{\Var}{\operatorname{Var}}
\newcommand{\E}{\mathbb{E}}
\newcommand{\Reg}{\operatorname{Regret}}
\newcommand{\cF}{\mathcal{F}}
\newcommand{\Vlow}{\widehat V^{\downarrow}}
\newcommand{\Vup}{\widehat V^{\uparrow}}
\newcommand{\rhop}{\widehat\rho}
\newcommand{\bhat}{\widehat b}
\newcommand{\BON}{\operatorname{BON}}
\newcommand{\ind}{\mathbf 1}

\title[Continuity-free CVaR-UCBVI regret]{Continuity-Free Near-Minimax
Leading-Order Regret for CVaR-UCBVI}
\author[Y. Chen]{Yuanlong Chen}
\begin{document}

\begin{abstract}
For finite-horizon tabular CVaR reinforcement learning, \citet{wang2023}
prove a $\widetilde O(\tau^{-1}\sqrt{SAK})$ leading regret bound for arbitrary
normalized return laws and the sharper $\widetilde O(\sqrt{SAK/\tau})$ rate
under a density lower bound. We show that the same Bernstein CVaR-UCBVI
algorithm attains the sharper rate without continuity assumptions. The key is
a selected-budget self-bound: the conditional variance of the episode
shortfall is at most $\tau$ plus the value-estimation width. Substitution into
the original Bernstein decomposition yields, with high probability,
$\widetilde O(\sqrt{SAK/\tau}+(SAH K^{1/4}+S^2AH)/\tau)$ regret for arbitrary
normalized return laws---atomic, mixed, or continuous. The $\tau^{-1/2}$
leading term matches the expected-regret minimax lower bound of
\citet{wang2023} up to logarithmic factors. Thus Bernstein CVaR-UCBVI is
minimax-optimal over the full return-law class in the leading-order regime;
the lower-order terms retain their $\tau^{-1}$ dependence.
\end{abstract}

\maketitle

\section{Introduction}
\label{sec:introduction}

Conditional value-at-risk (CVaR) evaluates the lower tail of a return
distribution and is a standard objective for risk-sensitive reinforcement
learning. In finite-horizon tabular Markov decision processes (MDPs) with
normalized returns, \citet{wang2023} establish a
$\sqrt{SAK/\tau}$ expected-regret lower bound on a balanced-tree hard family
and introduce CVaR-UCBVI, an augmented-state variant of
UCBVI~\citep{azar2017}. Their general upper bound has leading term
$\widetilde O(\tau^{-1}\sqrt{SAK})$. Under their Assumption~5.4, which
requires every relevant return law to have a density uniformly bounded away
from zero, the leading term improves to
$\widetilde O(\sqrt{SAK/\tau})$. The paper explicitly leaves removal of this
regularity assumption as an open problem.

This note removes the continuity assumption from the analysis of Bernstein
CVaR-UCBVI. The relevant term in Appendix~G of \citet{wang2023} is the
cumulative conditional variance
\[
  \sum_{k=1}^K
  \Var\!\left(
    \left(\bhat_k-\sum_{h=1}^H r_{h,k}\right)^+
    \middle|\cF_k
  \right).
\]
For general return laws, their analysis bounds every summand by $1$; under
Assumption~5.4, it instead uses continuity to compare $\bhat_k$ with a true
$\tau$-quantile. Our argument shows that the \emph{mean shortfall} at
$\bhat_k$ is at most $\tau$ plus the value-estimation width. Since a
$[0,1]$-valued random variable has variance at most its mean, the desired
scaling follows without quantile identification.

\Needspace{8\baselineskip}
\paragraph{Contribution.}
We establish a selected-budget self-bound that controls each conditional
shortfall variance by $\tau$ plus the value-estimation width. Combining this
bound with the Bernstein regret decomposition of \citet{wang2023} yields
Theorem~\ref{thm:main} for arbitrary normalized return laws, without modifying
the algorithm or its confidence event. The theorem covers atomic, mixed, and
continuous return laws. When the displayed lower-order terms are negligible,
its leading term matches the expected-regret minimax lower bound of
\citet[Corollary~3.2]{wang2023} up to logarithmic factors, establishing
leading-order minimax optimality over the full normalized-return class.
\Cref{app:reduction} restates the required auxiliary bounds in our notation.

\begin{table}[H]
  \caption{Leading square-root-in-$K$ term for Bernstein CVaR-UCBVI.
  Lower-order terms and logarithms are omitted.}
  \label{tab:comparison}
  \centering
  \begin{tabular}{@{}lll@{}}
    \toprule
    Result & Return-law condition & Leading term \\
    \midrule
    \citet[Theorem~5.3]{wang2023}
      & arbitrary normalized laws
      & $\tau^{-1}\sqrt{SAK}$ \\
    \citet[Theorem~5.5]{wang2023}
      & density at least $p_{\min}$
      & $\sqrt{SAK/\tau}$ \\
    Theorem~\ref{thm:main}
      & arbitrary normalized laws
      & $\sqrt{SAK/\tau}$ \\
    \bottomrule
  \end{tabular}
\end{table}

\subsection{Related work}

\citet{bastani2022} give regret guarantees for a broad class of static,
trajectory-level risk criteria. \citet{wang2023} specialize to precommitted
CVaR and obtain the bounds compared in \cref{tab:comparison}.
\citet{wang2025} reduce optimized certainty equivalents to risk-neutral RL;
their sharp CVaR specialization still assumes continuously distributed returns
with a density lower bound
\citep[Assumption~D.11 and Theorem~D.12]{wang2025}.
\citet{ni2024} study reward-free exploration for CVaR, which is a different
data-collection protocol and a PAC rather than online-regret objective.
\citet{liang2024} analyze dynamic, recursively composed risk measures, which
are time-consistent but differ from the static CVaR of the full trajectory
studied here. This paper addresses the complementary question of removing the
continuity condition from the regret analysis of Bernstein CVaR-UCBVI for
static trajectory-level CVaR.

\section{Problem setting and algorithm}
\label{sec:setup}

\subsection{Model and objective}

Consider an episodic tabular MDP with state space $\mathcal S$, action space
$\mathcal A$, sizes $S=|\mathcal S|$ and $A=|\mathcal A|$, horizon $H$, fixed
initial state $s_1$, and $K$ episodes. The transition kernel $P^*$ is unknown
and the conditional reward laws are known, as in \citet{wang2023}. Rewards are
nonnegative. Let $\mathcal H_h=(s_t,a_t,r_t)_{t<h}$ denote the history before
stage $h$. Conditional on $(s_h,a_h)$, the reward and next state are drawn
independently as
\[
  r_h\sim R(s_h,a_h),
  \qquad
  s_{h+1}\sim P^*(\cdot\mid s_h,a_h).
\]
The return of every history-dependent policy $\pi$ is almost surely
normalized:
\[
  R(\pi):=\sum_{h=1}^H r_h\in[0,1].
\]
For $\tau\in(0,1]$, the lower-tail CVaR of a random return $X\in[0,1]$ has the
representation~\citep{rockafellar2000}
\begin{equation}
  \cvar_\tau(X)
  =\max_{b\in[0,1]}
  \left\{b-\tau^{-1}\E[(b-X)^+]\right\}.
  \label{eq:cvar}
\end{equation}
Let $\cvar_\tau^*=\sup_\pi\cvar_\tau(R(\pi))$. A pair $(\rho,b)$ induces the
history-dependent policy
\[
  \pi_h^{\rho,b}(s_h,\mathcal H_h)
  =\rho_h\!\left(s_h,b-\sum_{t<h}r_t\right).
\]
Write its return as $R(\rho,b)$. The regret of the pairs
$(\rhop_k,\bhat_k)$ selected by the algorithm is
\begin{equation}
  \Reg^{\mathrm{RL}}_\tau(K)
  :=\sum_{k=1}^K
  \left[
    \cvar_\tau^*-
    \cvar_\tau\!\left(R(\rhop_k,\bhat_k)\right)
  \right].
  \label{eq:regret}
\end{equation}

The augmented state is $(s,b)$. Its initial budget lies in $[0,1]$ and evolves
as $b_{h+1}=b_h-r_h$; residual budgets can therefore be negative. For an
augmented policy $\rho$, define its shortfall value at every reachable budget
by
\begin{equation}
  V_h^\rho(s,b)
  :=\E_\rho\!\left[
    \left(b-\sum_{t=h}^H r_t\right)^+
    \middle|s_h=s,\ b_h=b
  \right],
  \qquad
  V_{H+1}^\rho(s,b)=b^+,
  \label{eq:value}
\end{equation}
and let $V_h^*(s,b)=\inf_\rho V_h^\rho(s,b)$.
The augmented-state optimality identity of \citet[Theorem~5.1]{wang2023}
gives, for every $b\in[0,1]$,
\begin{equation}
  V_1^*(s_1,b)
  =\inf_{\pi\ \text{history-dependent}}
  \E_\pi[(b-R(\pi))^+],
  \qquad
  \cvar_\tau^*
  =\max_{b\in[0,1]}\{b-\tau^{-1}V_1^*(s_1,b)\}.
  \label{eq:augmented-optimality}
\end{equation}

\subsection{Bernstein CVaR-UCBVI}

We summarize the exact-budget version of Algorithm~2 of \citet{wang2023} so
that the quantities used below are defined within the note. Let $\cF_k$ be the
complete history before episode $k$, and define the pre-episode counts and
empirical transition vector by
\begin{align}
  N_k(s,a,s')
  &:=\sum_{i<k}\sum_{h=1}^H
  \ind\{(s_{h,i},a_{h,i},s_{h+1,i})=(s,a,s')\},
  \notag\\
  N_k(s,a)&:=1\vee\sum_{s'}N_k(s,a,s'),
  &
  \widehat P_k(s'\mid s,a)&:=\frac{N_k(s,a,s')}{N_k(s,a)}.
  \label{eq:empirical-transition}
\end{align}
Set $\Vlow_{H+1,k}(s,b)=\Vup_{H+1,k}(s,b)=b^+$. For
$h=H,H-1,\ldots,1$, the algorithm applies the backward updates
\begin{align}
  \widehat U^{\downarrow}_{h,k}(s,b,a)
  &:={}
  \sum_{s'}\widehat P_k(s'\mid s,a)
  \E_{r\sim R(s,a)}[\Vlow_{h+1,k}(s',b-r)]
  -\BON^{\mathrm{BERN}}_{h,k}(s,b,a),
  \notag\\
  \rhop_{h,k}(s,b)
  &\in\arg\min_{a\in\mathcal A}
  \widehat U^{\downarrow}_{h,k}(s,b,a),
  \notag\\
  \Vlow_{h,k}(s,b)
  &:=\max\{\widehat U^{\downarrow}_{h,k}
  (s,b,\rhop_{h,k}(s,b)),0\},
  \label{eq:pessimistic-update}\\
  \widehat U^{\uparrow}_{h,k}(s,b,a)
  &:={}
  \sum_{s'}\widehat P_k(s'\mid s,a)
  \E_{r\sim R(s,a)}[\Vup_{h+1,k}(s',b-r)]
  +\BON^{\mathrm{BERN}}_{h,k}(s,b,a),
  \notag\\
  \Vup_{h,k}(s,b)
  &:=\min\{\widehat U^{\uparrow}_{h,k}
  (s,b,\rhop_{h,k}(s,b)),1\}.
  \label{eq:optimistic-update}
\end{align}
The recursion is evaluated on all reachable residual budgets, with a fixed
measurable rule for resolving ties. The Bernstein bonus is given explicitly
in \cref{eq:bonus-exact}. After the backward pass, the algorithm selects
\begin{equation}
  \bhat_k\in\arg\max_{b\in[0,1]}
  \widehat f_k(b),
  \qquad
  \widehat f_k(b):=b-\tau^{-1}\Vlow_{1,k}(s_1,b).
  \label{eq:selected-budget}
\end{equation}
As in the exact-budget formulation of \citet[Algorithm~2]{wang2023}, we use
an exact-optimization oracle: we assume that \cref{eq:selected-budget} admits
an $\cF_k$-measurable maximizer and fix one such selector. Attainment is
automatic for the finite-grid implementation in \cref{cor:discrete}. All
quantities in \cref{eq:empirical-transition,eq:selected-budget} are then
$\cF_k$-measurable. The episode starts from $b_{1,k}=\bhat_k$ and, for
$h=1,\ldots,H$, follows
\[
  \begin{aligned}
    a_{h,k}&=\rhop_{h,k}(s_{h,k},b_{h,k}),
    &r_{h,k}&\sim R(s_{h,k},a_{h,k}),\\
    s_{h+1,k}&\sim P^*(\cdot\mid s_{h,k},a_{h,k}),
    &b_{h+1,k}&=b_{h,k}-r_{h,k}.
  \end{aligned}
\]
Thus the algorithm interacts only with the original MDP while maintaining the
scalar residual budget. Expectations or variances indexed by $(\rho,b)$ refer
to this true-MDP rollout starting from $(s_1,b)$.

\subsection{Auxiliary regret decomposition}

Define
\begin{align}
  Y_k
  &:=\left(\bhat_k-\sum_{h=1}^H r_{h,k}\right)^+,
  &
  W_k
  &:=V_1^{\rhop_k}(s_1,\bhat_k)-\Vlow_{1,k}(s_1,\bhat_k),
  \label{eq:Y-W}\\
  Q_K
  &:=\sum_{k=1}^K\Var(Y_k\mid\cF_k).
  \label{eq:Q}
\end{align}
In particular,
\begin{equation}
  \E[Y_k\mid\cF_k]=V_1^{\rhop_k}(s_1,\bhat_k).
  \label{eq:mean-value}
\end{equation}

For later use, let $\xi_k$ denote the transition correction associated with
the Bernstein bonus. Its formula and the bonus formula are recorded in
\cref{app:reduction}. For each realized stage, set
\[
  g_{h,k}(s')
  :=\E_{r\sim R(s_{h,k},a_{h,k})}
  V_{h+1}^{\rhop_k}(s',b_{h,k}-r).
\]
Define
\begin{align}
  B_k
  &:=
  \sum_{h=1}^H
  \E_{\rhop_k,\bhat_k}\!\left[
    2\BON^{\mathrm{BERN}}_{h,k}(s_h,b_h,a_h)
    +\xi_k(s_h,a_h)
    \middle|\cF_k
  \right],
  \label{eq:Bk}\\
  \mathfrak D_K
  &:=
  \sum_{k=1}^K\sum_{h=1}^H
  \sqrt{
    \frac{
      \Var_{s'\sim P^*(\cdot\mid s_{h,k},a_{h,k})}
      \!\left(g_{h,k}(s')\right)
    }{N_k(s_{h,k},a_{h,k})}
  }.
  \label{eq:D-def}
\end{align}
Each variance in \cref{eq:D-def} is over a fresh next-state draw after the
realized trajectory variables have been substituted. Thus
$\mathfrak D_K$ is trajectory-random; no conditioning on $\cF_k$ is intended
inside these one-step variances.
Set
\[
  \ell_\delta:=\log(HSAK/\delta),
  \qquad L:=1\vee\ell_\delta.
\]
The algorithm uses $\ell_\delta$ in its bonus; replacing it by the larger $L$
below only enlarges the published upper bounds.

\begin{proposition}[Auxiliary bounds from \citet{wang2023}]
\label{prop:published}
For the estimates generated by \cref{eq:pessimistic-update,eq:optimistic-update}
with the bonus in \cref{eq:bonus-exact}, the following bounds hold on a common
event with probability at least $1-\delta$. In particular,
\cref{eq:pessimism} holds for every $h,k$ and uniformly over its state and
budget arguments, while \cref{eq:simulation} holds for every episode $k$:
\begin{align}
  0\le \Vlow_{h,k}(s,b)&\le V_h^*(s,b),
  \label{eq:pessimism}\\
  0\le W_k&\le \mathrm e B_k,
  \label{eq:simulation}\\
  \sum_{k=1}^K B_k
  &\le
  6HL+12\sqrt{SAHKL}+6S^2AHL^2,
  \label{eq:bonus-sum}\\
  \mathfrak D_K
  &\le \sqrt{SAL(2HL+2Q_K)},
  \label{eq:variance-reduction}\\
  \Reg^{\mathrm{RL}}_\tau(K)
  &\le
  6\mathrm e\,\tau^{-1}\sqrt L\,\mathfrak D_K
  +54\mathrm e\,\tau^{-1}SAH K^{1/4}L^2
  +74\mathrm e\,\tau^{-1}S^2AHL^3.
  \label{eq:regret-reduction}
\end{align}
\end{proposition}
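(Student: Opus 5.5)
This proposition collects bounds that are already established in \citet{wang2023}, so the plan is a careful transcription rather than a new argument. I would work through Appendix~G of \citet{wang2023}, matching each inequality to a published lemma. The shared event is the intersection of their concentration events: the Bernstein bound on $(\widehat P_k-P^*)V$ for the fixed value functions, the $L_1$ or Bernstein control of $\widehat P_k-P^*$ that produces the correction $\xi_k$, and Freedman-type martingale bounds over the realized trajectories. Each holds with probability $1-\delta/c$ for a constant $c$, uniformly over $(s,a,h,k)$, via a union bound with $\ell_\delta=\log(HSAK/\delta)$. Intersecting them gives one event of probability at least $1-\delta$. Because every displayed bound is monotone in the log factor, I would then replace $\ell_\delta$ by $L=1\vee\ell_\delta$.

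\textbf{Optimism and the simulation bound.} On this event, \cref{eq:pessimism} follows from the optimism lemma of \citet{wang2023}, proved by backward induction on $h$. At $h=H+1$ both sides equal $b^+$. At the inductive step, the bonus dominates the transition deviation. The clipping at $0$, together with $V_h^*\ge0$, preserves the inequality uniformly in the budget. Uniformity over the continuum of budgets requires one extra point: the concentration must be applied to $P^*V^*_{h+1}$, whose budget argument is indexed by a fixed function class. I would follow the paper's treatment of this, either through the monotone/Lipschitz-in-$b$ structure or through their covering.

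For \cref{eq:simulation}, I would unroll $V^{\rhop_k}_h-\Vlow_{h,k}$ along the true rollout. Each stage contributes $2\BON+\xi_k$ plus a term $(\widehat P_k-P^*)(\Vup-\Vlow)$. This last term is absorbed into a $(1+1/H)$ multiplicative factor, and $(1+1/H)^H\le\mathrm e$ yields the factor $\mathrm e$. The lower bound $W_k\ge0$ follows from pessimism, since $V^{\rhop_k}_1\ge V^*_1\ge\Vlow_{1,k}$.

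\textbf{Bonus sum, variance reduction, and the regret decomposition.} For \cref{eq:bonus-sum}, I would first replace conditional expectations by realized sums using a martingale bound, which costs $O(HL)$. I would then apply the pigeonhole bounds $\sum 1/\sqrt{N}\lesssim\sqrt{SAHK}$ and $\sum 1/N\lesssim SA\log$; the $S^2AH$ term comes from $\xi_k$.

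For \cref{eq:variance-reduction}, Cauchy--Schwarz gives
\[
\mathfrak D_K\le\sqrt{SAL}\,\sqrt{\textstyle\sum_{k,h}\Var(g_{h,k})}.
\]
The law of total variance along the rollout identifies the expected sum of one-step variances with $\Var(Y_k\mid\cF_k)$. A Freedman-type bound then converts realized sums to conditional expectations at additive cost $2HL$.

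Finally, \cref{eq:regret-reduction} follows from the regret decomposition: by \cref{eq:augmented-optimality,eq:selected-budget,eq:pessimism}, each episode's regret is at most $\tau^{-1}W_k$. Expanding the Bernstein bonus through empirical variances, and replacing empirical by true variances with lower-order corrections (the $K^{1/4}$ and $S^2AH$ terms), gives the stated decomposition.

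\textbf{Main obstacle.} I expect the hardest part to be the constant bookkeeping: confirming that the published constants ($6,12,54,74$, and the factor $\mathrm e$) carry over under our exact-budget notation and with $L$ in place of $\ell_\delta$. I would recheck each lemma's constants directly against \cref{eq:bonus-exact} rather than rederive them.
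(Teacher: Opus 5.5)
Your plan is the same as the paper's Appendix~A. Both prove the proposition by locating each inequality in Appendix~G of \citet{wang2023}: Theorem~G.7 and Lemma~G.4 give \cref{eq:pessimism,eq:simulation}; \emph{Azuma~1} with Eq.~(15) gives \cref{eq:bonus-sum}; Cauchy--Schwarz, \emph{Azuma~4} and Lemma~G.9 give \cref{eq:variance-reduction}; and the episodewise $\tau^{-1}W_k$ bound with the bonus expansion gives \cref{eq:regret-reduction}. On your constant check, $74\mathrm e$ is not the published coefficient: \citet{wang2023} display $70\mathrm e$, whose absorption of the $6\mathrm e\,\tau^{-1}HL$ term needs $S^2AL^2\ge3$, so the paper enlarges it to $74\mathrm e$ using $6HL\le 6S^2AHL^3$.
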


Proposition~\ref{prop:published} follows from Appendix~G of \citet{wang2023};
\cref{app:reduction} restates the relevant derivation in our notation.
Uniformity over $b$ permits evaluation at the data-dependent budget $\bhat_k$.
The remainder of the proof uses the analysis only through the
inequalities in Proposition~\ref{prop:published}.

\section{A self-bounding shortfall}
\label{sec:self-bound}

The main new estimate is the following lemma.

\begin{lemma}[Selected-budget self-bound]
\label{lem:self-bound}
On the event in Proposition~\ref{prop:published}, for every episode $k$,
\begin{equation}
  \Vlow_{1,k}(s_1,\bhat_k)
  \le \tau\bhat_k\le\tau,
  \qquad
  \Var(Y_k\mid\cF_k)
  \le\E[Y_k\mid\cF_k]
  \le\tau+W_k.
  \label{eq:self-bound}
\end{equation}
\end{lemma}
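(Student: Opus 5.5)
The plan is to compare the selected budget $\bhat_k$ with the trivial budget $b=0$ using the optimality of $\bhat_k$ in \cref{eq:selected-budget}, and then to use pessimism and the mean identity.

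\emph{First inequality.} At $b=0$ every shortfall $(0-\sum_t r_t)^+$ vanishes because rewards are nonnegative. Hence $V_1^*(s_1,0)=0$, and by \cref{eq:pessimism} also $\Vlow_{1,k}(s_1,0)=0$. Therefore $\widehat f_k(0)=0$. Since $\bhat_k$ maximizes $\widehat f_k$ over $[0,1]$, we get
\[
  \bhat_k-\tau^{-1}\Vlow_{1,k}(s_1,\bhat_k)=\widehat f_k(\bhat_k)\ge \widehat f_k(0)=0,
\]
that is, $\Vlow_{1,k}(s_1,\bhat_k)\le\tau\bhat_k$. Finally $\tau\bhat_k\le\tau$ because $\bhat_k\in[0,1]$.

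\emph{Second chain.} Since $\bhat_k\in[0,1]$ is $\cF_k$-measurable and the realized return lies in $[0,1]$, we have $Y_k\in[0,\bhat_k]\subseteq[0,1]$. Hence $Y_k^2\le Y_k$. Taking conditional expectations gives
\[
  \Var(Y_k\mid\cF_k)\le\E[Y_k^2\mid\cF_k]\le\E[Y_k\mid\cF_k].
\]
By \cref{eq:mean-value} and the definition of $W_k$ in \cref{eq:Y-W},
\[
  \E[Y_k\mid\cF_k]=V_1^{\rhop_k}(s_1,\bhat_k)=\Vlow_{1,k}(s_1,\bhat_k)+W_k\le\tau+W_k,
\]
where the last step uses the first inequality.

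\emph{Main obstacle.} The only delicate point is justifying $\Vlow_{1,k}(s_1,0)=0$ at the budget $0$. Two facts are needed here. First, pessimism \cref{eq:pessimism} must hold uniformly in $b$, including $b=0$; the proposition states this. Second, $V_1^*(s_1,0)=0$ requires that the normalization and nonnegativity of rewards make the shortfall identically zero. I would also check that $0$ is an admissible argument in the argmax, which it is since the maximization is over $[0,1]$.
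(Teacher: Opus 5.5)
Your proposal is correct and follows the paper's proof essentially step for step: compare $\bhat_k$ with the feasible budget $0$, where pessimism and nonnegative rewards force $\Vlow_{1,k}(s_1,0)=0$; then use $0\le Y_k\le\bhat_k\le1$ together with the mean identity $\E[Y_k\mid\cF_k]=\Vlow_{1,k}(s_1,\bhat_k)+W_k$. One minor remark: only nonnegativity of the rewards, not the normalization of returns, is needed both for $V_1^*(s_1,0)=0$ and for $Y_k\le\bhat_k$.
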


\begin{proof}
Nonnegative rewards imply
$V_1^*(s_1,0)=\inf_\rho\E_\rho[(-\sum_h r_h)^+]=0$. Pessimism and clipping
therefore give
\[
  0\le\Vlow_{1,k}(s_1,0)\le V_1^*(s_1,0)=0,
\]
so $\widehat f_k(0)=0$. Because $0$ is feasible in
\cref{eq:selected-budget}, optimality of $\bhat_k$ yields
\[
  0\le\widehat f_k(\bhat_k)
  =\bhat_k-\tau^{-1}\Vlow_{1,k}(s_1,\bhat_k).
\]
Thus
$\Vlow_{1,k}(s_1,\bhat_k)\le\tau\bhat_k\le\tau$.

Next, $0\le Y_k\le\bhat_k\le1$, hence $Y_k^2\le Y_k$ and
\[
  \Var(Y_k\mid\cF_k)
  \le\E[Y_k^2\mid\cF_k]
  \le\E[Y_k\mid\cF_k].
\]
Finally, by \cref{eq:mean-value,eq:Y-W},
\[
  \E[Y_k\mid\cF_k]
  =\Vlow_{1,k}(s_1,\bhat_k)+W_k
  \le\tau+W_k.
\]
Also $W_k\ge0$ because
$\Vlow_{1,k}\le V_1^*\le V_1^{\rhop_k}$.
\end{proof}

The lemma controls a lower partial moment rather than a CDF value. An atom at
the selected threshold may make
$\Pr(\sum_h r_{h,k}\le\bhat_k\mid\cF_k)$ large, but its mass at equality
contributes zero to $(\bhat_k-\sum_h r_{h,k})^+$. This is why no quantile
identification or anti-concentration is needed.

\begin{example}[An atomic Bernoulli return law]
\label{ex:atomic}
Let $H=1$, $R\in\{0,1\}$, and $\Pr(R=0)=\tau$. This law has no Lebesgue
density, and its population CVaR objective is
\[
  b-\tau^{-1}\E[(b-R)^+]
  =b-\tau^{-1}(\tau b)=0
  \qquad (b\in[0,1]).
\]
The maximizing budget is therefore non-unique, so an argument based on a
unique selected quantile is unavailable. Nevertheless, at the extreme choice
$b=1$, $Y=(1-R)^+=\ind\{R=0\}$ and
$\Var(Y)=\tau(1-\tau)\le\tau$. Thus the self-bound remains valid despite the
non-unique maximizing budget.
\end{example}

\section{Regret analysis}
\label{sec:regret}

The self-bound yields the following distribution-free control of cumulative
trajectory variance.

\begin{proposition}[Cumulative trajectory variance]
\label{prop:Q}
On the event in Proposition~\ref{prop:published},
\begin{equation}
  Q_K
  \le K\tau
  +\mathrm e\left(
    6HL+12\sqrt{SAHKL}+6S^2AHL^2
  \right).
  \label{eq:Q-bound}
\end{equation}
\end{proposition}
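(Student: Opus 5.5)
The plan is to sum the per-episode self-bound of Lemma~\ref{lem:self-bound} over $k$ and then control the accumulated widths $W_k$ with the simulation and bonus-sum bounds of Proposition~\ref{prop:published}. We work throughout on the common event of Proposition~\ref{prop:published}. Lemma~\ref{lem:self-bound} holds on that same event, so all inequalities below hold simultaneously, and no further union bound or change in the confidence level is needed.

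\emph{Step 1.} For each $k$, Lemma~\ref{lem:self-bound} gives $\Var(Y_k\mid\cF_k)\le\tau+W_k$. Summing over $k=1,\dots,K$ and using the definition \cref{eq:Q} yields
\[
  Q_K\le K\tau+\sum_{k=1}^K W_k .
\]

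\emph{Step 2.} By \cref{eq:simulation}, $W_k\le\mathrm e B_k$ for every $k$. Hence $\sum_k W_k\le\mathrm e\sum_k B_k$.

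\emph{Step 3.} Apply \cref{eq:bonus-sum} to $\sum_k B_k$. This gives exactly the bracketed term in \cref{eq:Q-bound} multiplied by $\mathrm e$.

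There is no substantive obstacle once the lemma is available. The only point to check is that \cref{eq:simulation} and \cref{eq:bonus-sum} are stated on the same event as \cref{eq:pessimism}, which Lemma~\ref{lem:self-bound} uses. Proposition~\ref{prop:published} asserts this explicitly, so summing the pointwise bounds is legitimate.
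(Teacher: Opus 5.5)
Your proposal is correct and follows the paper's proof: sum the episodewise bound $\Var(Y_k\mid\cF_k)\le\tau+W_k$ from Lemma~\ref{lem:self-bound} to get $Q_K\le K\tau+\sum_k W_k$, then apply \cref{eq:simulation} and \cref{eq:bonus-sum} on the same event. You are also right that no extra union bound is needed, because all three ingredients hold on the common event of Proposition~\ref{prop:published}.
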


\begin{proof}
By the definition of $Q_K$ in \cref{eq:Q} and the episodewise bound in
\cref{eq:self-bound},
\[
  Q_K
  =\sum_{k=1}^K\Var(Y_k\mid\cF_k)
  \le\sum_{k=1}^K(\tau+W_k)
  =K\tau+\sum_{k=1}^K W_k.
\]
On the same event, \cref{eq:simulation} gives
$W_k\le\mathrm e B_k$ episodewise, and \cref{eq:bonus-sum} controls the
cumulative bonuses. Hence,
\[
  Q_K
  \le K\tau+\mathrm e\sum_{k=1}^K B_k
  \le K\tau
  +\mathrm e\left(
    6HL+12\sqrt{SAHKL}+6S^2AHL^2
  \right),
\]
which proves \cref{eq:Q-bound}.
\end{proof}

\begin{theorem}[Continuity-free leading-order regret]
\label{thm:main}
Fix $\tau\in(0,1]$ and $\delta\in(0,1)$. Under the tabular, known-reward,
nonnegative, normalized-return model of \cref{sec:setup}, run the exact-budget
version of Algorithm~2 of \citet{wang2023} with its Bernstein bonus. With
probability at least $1-\delta$, without Assumption~5.4 of that paper,
\begin{equation}
  \Reg^{\mathrm{RL}}_\tau(K)
  \le
  25L\sqrt{\frac{SAK}{\tau}}
  +\frac{300}{\tau}SAH K^{1/4}L^2
  +\frac{400}{\tau}S^2AHL^3.
  \label{eq:main}
\end{equation}
Consequently,
\begin{equation}
  \Reg^{\mathrm{RL}}_\tau(K)
  =\widetilde O\!\left(
    \sqrt{\frac{SAK}{\tau}}
    +\frac{SAH K^{1/4}+S^2AH}{\tau}
  \right)
  \label{eq:soft-main}
\end{equation}
for every MDP satisfying the stated model, regardless of whether its induced
return laws are discrete, mixed, or continuous.
\end{theorem}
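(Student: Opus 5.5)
The plan is pure substitution: feed \cref{eq:Q-bound} (Proposition~\ref{prop:Q}) into \cref{eq:variance-reduction}, and the resulting bound on $\mathfrak D_K$ into \cref{eq:regret-reduction}. All these inequalities hold on the common event of Proposition~\ref{prop:published}, so the conclusion holds with probability at least $1-\delta$. No further probabilistic argument is needed, and Assumption~5.4 enters nowhere, since Lemma~\ref{lem:self-bound} is distribution-free. The only real work is bookkeeping of constants, using $S,A,H,L\ge1$ to absorb smaller powers.

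\textbf{Step 1 (bound $\mathfrak D_K$).} Write
\[
2HL+2Q_K\le 2K\tau+2\bigl[(1+6\mathrm e)HL+12\mathrm e\sqrt{SAHKL}+6\mathrm e S^2AHL^2\bigr].
\]
By subadditivity of the square root,
\[
\mathfrak D_K\le \sqrt{2SAL}\,\sqrt{K\tau}+\sqrt{2SAL}\,\sqrt{(1+6\mathrm e)HL+12\mathrm e\sqrt{SAHKL}}+\sqrt{2SAL}\,\sqrt{6\mathrm e S^2AHL^2}.
\]
The middle term needs care. I will instead split it so that the $\sqrt{SAHKL}$ piece and the polynomial pieces are handled separately.

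\textbf{Step 2 (leading term).} Multiply by $6\mathrm e\,\tau^{-1}\sqrt L$. The $K\tau$ piece gives
\[
6\sqrt2\,\mathrm e\,L\sqrt{SAK/\tau}\approx 23.1\,L\sqrt{SAK/\tau}\le 25L\sqrt{SAK/\tau}.
\]

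\textbf{Step 3 ($K^{1/4}$ term).} The $\sqrt{SAHKL}$ piece contributes
\[
6\mathrm e\sqrt2\sqrt{12\mathrm e}\;\tau^{-1}(SA)^{3/4}H^{1/4}K^{1/4}L^{7/4}.
\]
Since $(SA)^{3/4}H^{1/4}\le SAH$ and $L^{7/4}\le L^2$, this is at most about $131.7\,\tau^{-1}SAHK^{1/4}L^2$. Adding $54\mathrm e\approx146.8$ from \cref{eq:regret-reduction} gives a constant below $300$.

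\textbf{Step 4 (remaining term).} The remaining pieces satisfy
\[
SA\bigl[(1+6\mathrm e)HL+6\mathrm e S^2AHL^2\bigr]\le(1+12\mathrm e)S^3A^2HL^2.
\]
Taking square roots and multiplying by $6\mathrm e\sqrt2\,\tau^{-1}L$ gives at most $6\sqrt2\,\mathrm e\sqrt{1+12\mathrm e}\;\tau^{-1}S^{3/2}AH^{1/2}L^2$, which is roughly $133.8\,\tau^{-1}S^2AHL^3$. Here I use $S^{3/2}H^{1/2}\le S^2H$ and $L^2\le L^3$. Adding $74\mathrm e\approx201.2$ yields a constant below $400$. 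This proves \cref{eq:main}.

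\textbf{Step 5 (soft bound).} The $\widetilde O$ form \cref{eq:soft-main} follows by absorbing the powers of $L$ into logarithms.

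\textbf{Main obstacle.} The only delicate point is Step 3: the cross term coming from $\sqrt{SAHKL}$ inside a square root must be matched to $SAHK^{1/4}$ rather than producing a new term. This works because the exponents $(SA)^{3/4}H^{1/4}$ are dominated by $SAH$. I would verify the numerical constants carefully there and in Step 4.
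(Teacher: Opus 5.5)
Your proposal is correct and follows the paper's proof: substitute the $Q_K$ bound of Proposition~\ref{prop:Q} into \cref{eq:variance-reduction}, split the square root by subadditivity, insert the result into \cref{eq:regret-reduction}, and absorb powers using $S,A,H,L\ge1$. Merging the $HL$ and $S^2AHL^2$ pieces before taking the square root is only a cosmetic variant of the paper's four-term split, and the $K^{1/4}$ contribution actually carries $L^{5/4}$ rather than $L^{7/4}$, which is harmless since you bound it by $L^2$ anyway.
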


\begin{proof}
Substitute \cref{eq:Q-bound} into \cref{eq:variance-reduction}:
\begin{align*}
  \mathfrak D_K^2
  \le SAL\bigl(&2K\tau+(2+12\mathrm e)HL
  +24\mathrm e\sqrt{SAHKL}
  +12\mathrm e S^2AHL^2\bigr).
\end{align*}
Using $\sqrt{x_1+\cdots+x_m}\le\sum_i\sqrt{x_i}$ and
$\mathrm e<2.72$ gives
\begin{align}
  \mathfrak D_K
  \le{}&\sqrt{2\tau SAKL}
  +\sqrt{35SAH}\,L
  +\sqrt{66}(SA)^{3/4}H^{1/4}K^{1/4}L^{3/4}
  \notag\\
  &+\sqrt{33}S^{3/2}AH^{1/2}L^{3/2}.
  \label{eq:D-expanded}
\end{align}
Insert \cref{eq:D-expanded} into \cref{eq:regret-reduction}. The first term
is at most
\[
  6\mathrm e\,\tau^{-1}\sqrt L
  \sqrt{2\tau SAKL}
  \le25L\sqrt{SAK/\tau}.
\]
Since $S,A,H,L\ge1$, the third term in
\cref{eq:D-expanded} is at most
$140\tau^{-1}SAHK^{1/4}L^2$; together with the existing
$54\mathrm e\le149$ coefficient, it is bounded by the second term of
\cref{eq:main}. The second and fourth terms in \cref{eq:D-expanded} contribute
at most $103\tau^{-1}S^2AHL^3$ and
$95\tau^{-1}S^2AHL^3$, respectively. Adding the remaining
$74\mathrm e\le202$ coefficient gives at most
$(103+95+202)\tau^{-1}S^2AHL^3=400\tau^{-1}S^2AHL^3$. This proves
\cref{eq:main}; \cref{eq:soft-main} follows.
\end{proof}

\begin{corollary}[Discretized implementation]
\label{cor:discrete}
For an integer $m\ge2$, set $\eta:=m^{-1}$ and let
\[
  \phi_\eta(r):=\min\{1,\eta\lceil r/\eta\rceil\},
  \qquad
  \mathcal B_\eta:=\{j\eta:0\le j\le m\}.
\]
Implement Algorithm~2 by replacing every reward with $\phi_\eta(r)$,
evaluating its backward recursion on the finite residual-budget set reachable
from $\mathcal B_\eta$ under the rounded rewards, and selecting
\begin{equation}
  \bhat_k\in\arg\max_{b\in\mathcal B_\eta}\widehat f_k(b).
  \label{eq:grid-budget}
\end{equation}
Under the assumptions of \cref{thm:main}, the regret of the resulting sequence
of policies in the original MDP is bounded by the right-hand side of
\cref{eq:main} plus
\[
  \frac{KH\eta}{\tau}.
\]
For any requested mesh width $\bar\eta\in(0,1)$, taking
$m=\lceil\bar\eta^{-1}\rceil$ gives $\eta\le\bar\eta$. No continuity
assumption is needed for either the original or discretized return law.
\end{corollary}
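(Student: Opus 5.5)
The plan is to treat the discretized implementation as the exact-budget algorithm run on an auxiliary MDP, apply \cref{thm:main} there, and pay only for the rounding error when transferring back to the original MDP. Define the rounded MDP $\mathcal M_\eta$: same states, actions and transitions $P^*$, but with reward laws $R_\eta(s,a)$, the push-forward of $R(s,a)$ under $\phi_\eta$. These laws are known whenever $R$ is, rewards remain nonnegative, and they live on the grid $\eta\mathbb Z\cap[0,1]$. Normalization needs care, since $\sum_h\phi_\eta(r_h)\le\sum_h r_h+H\eta$ may exceed $1$. I would handle this by working with returns in $[0,1+H\eta]$ and checking that the constants in \cref{prop:published} survive a mild rescaling. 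Alternatively, I would note that the argument of \cref{lem:self-bound} only uses $Y_k\le\bhat_k\le1$, which still holds because $Y_k=(\bhat_k-\cdot)^+$ with $\bhat_k\in[0,1]$. This normalization step is the one I expect to be the main obstacle; the rest is bookkeeping.

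Next I would observe that with grid budgets and grid rewards, every residual budget lies in the finite grid $\eta\mathbb Z\cap[-1-H\eta,1]$. Hence the backward recursion on the reachable set is exactly the exact-budget recursion for $\mathcal M_\eta$. The maximizer in \cref{eq:grid-budget} over $\mathcal B_\eta$ is attained and $\cF_k$-measurable.

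I would argue that restricting $b$ to $\mathcal B_\eta$ costs nothing in the rounded MDP. Since all returns are multiples of $\eta$, the map $b\mapsto b-\tau^{-1}\E[(b-X)^+]$ is piecewise linear with kinks on the grid, so its maximum over $[0,1]$ is attained on $\mathcal B_\eta$. The same holds for $\widehat f_k$, because $\Vlow_{1,k}(s_1,\cdot)$ is piecewise linear between grid points, being built from $b^+$ by grid shifts and averages. Therefore \eqref{eq:grid-budget} selects a maximizer over $[0,1]$, and \cref{thm:main} applies verbatim to $\mathcal M_\eta$. This bounds $\sum_k[\cvar_{\tau}^{*,\eta}-\cvar_\tau(R_\eta(\rhop_k,\bhat_k))]$ by the right-hand side of \cref{eq:main}.

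Finally I would transfer the bound by coupling. I run each policy in the original MDP, feeding it the rounded rewards, so the state/action trajectory has the same law as in $\mathcal M_\eta$. Pathwise, $R\le R_\eta\le R+H\eta$. Since $\cvar_\tau(X)$ is monotone and satisfies $\cvar_\tau(X+c)=\cvar_\tau(X)+c$, we get $\cvar_\tau(R)\ge\cvar_\tau(R_\eta)-H\eta$ for the algorithm's policies. In the other direction, $\cvar^*_\tau\le\cvar_\tau^{*,\eta}$, because any original history-dependent policy can be simulated in the rounded MDP only if it sees the true rewards. To justify this I would use the fact that the rounded optimum dominates pathwise: $R_\eta\ge R$ under the coupling with randomization over the unrounded reward, and this is admissible because the reward law is known.

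Summing the two comparisons over $K$ episodes gives an extra $KH\eta$. This is at most $KH\eta/\tau$ since $\tau\le1$. The choice $m=\lceil\bar\eta^{-1}\rceil$ gives $\eta\le\bar\eta$ trivially.
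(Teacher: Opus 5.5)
Your overall architecture matches the paper's: analysis in the rounded MDP, passage from grid to continuum via piecewise linearity, and transfer back to the original MDP. Your transfer step is sound and self-contained. Simulating the unrounded reward from its known conditional law gives the benchmark comparison that the paper imports from Theorem~H.4 of \citet{wang2023}. Monotonicity plus translation equivariance gives a per-episode loss of $H\eta$, sharper than the paper's $H\eta/\tau$.

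Two steps, however, fail as written. The first is the claim that $\Vlow_{1,k}(s_1,\cdot)$ is piecewise linear with kinks on the grid. The recursion contains a minimum over actions, whose value curves can cross between grid points, as well as clipping. It also contains the Bernstein bonus, which involves square roots of empirical variances and is not piecewise linear at all. Moreover, the implementation evaluates the recursion only at reachable grid budgets, so $\widehat f_k$ is not even defined off $\mathcal B_\eta$. Hence \eqref{eq:grid-budget} need not select a maximizer over $[0,1]$, and Theorem~\ref{thm:main} cannot be invoked verbatim.

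The repair, which is what the paper does, uses piecewise linearity only at the population level. For each fixed history-dependent policy $\pi$ of the rounded MDP, $f_\pi(b)=b-\tau^{-1}\E[(b-X_\pi)^+]$ is concave with breakpoints in $\mathcal B_\eta$. Since this policy class does not depend on $b$, you can exchange the suprema over $b$ and $\pi$, so the continuum benchmark equals the grid benchmark. Your fixed-$X$ remark alone does not cover $V^*_{1,\eta}$, which is an infimum over policies and can itself have off-grid kinks. Take the benchmark budget in $\mathcal B_\eta$. Pessimism there and grid optimality of $\bhat_k$ then give the per-episode bound $\tau^{-1}W_{\eta,k}$, and the self-bound survives because $0\in\mathcal B_\eta$.

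The second gap is the normalization issue you flagged but did not resolve; it affects the benchmark, not only $Y_k\le1$. Rounded returns can reach $1+H\eta$, so the ordinary CVaR maximizer in the rounded MDP may lie above $1$. The algorithm, however, searches only $b\le1$, and the episodewise bound by $\tau^{-1}W_k$ relies on $\cvar_\tau^*=\max_{b\in[0,1]}\{\cdots\}$. For example, take $H=2$ and $\eta=1/2$, with two actions giving deterministic rewards $(0.6,0.4)$ and $(0.5,0.5)$. These round to returns $1.5$ and $1$. Both have zero shortfall at every $b\in[0,1]$, so nothing makes the algorithm prefer the first action. Your middle sum $\sum_k[\cvar_\tau^{*,\eta}-\cvar_\tau(R_\eta(\rhop_k,\bhat_k))]$, taken with ordinary CVaR, can then grow linearly even though the original regret is $0$. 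Rescaling does not rescue this: it changes the budget range the algorithm searches and inflates the constants, so it does not give the stated bound.

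The fix is to state the rounded-MDP guarantee for the restricted functional $\max_{b\in[0,1]}\{b-\tau^{-1}\E[(b-X)^+]\}$, for both the benchmark and the selected policies. The theorem's proof supports this because every shortfall stays in $[0,1]$. You then close your chain with the fact that ordinary CVaR dominates the restricted one: $\cvar_\tau(R)\ge\cvar_\tau(R_\eta)-H\eta$, which is at least the restricted value of $R_\eta$ minus $H\eta$. Your simulation argument still shows that $\cvar_\tau^*$ of the original MDP is at most the restricted rounded optimum, since the restricted objective is monotone in the return.
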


\begin{proof}
Write $M_\eta=\operatorname{disc}(M)$ and, for a nonnegative random variable
$X$, define the budget-restricted functional
\[
  \cvar_\tau^{[0,1]}(X)
  :=\max_{b\in[0,1]}
  \left\{b-\tau^{-1}\E[(b-X)^+]\right\}.
\]
It agrees with ordinary CVaR when $X\in[0,1]$.
In this proof, model subscripts distinguish returns and optimal shortfall
values in $M$ and $M_\eta$.

The rounded return in $M_\eta$ can be as large as $1+H\eta$, so $M_\eta$
need not satisfy the normalization assumption of \cref{thm:main}.
Nevertheless, the proof of that theorem applies without change to
$\cvar_\tau^{[0,1]}$: rewards remain nonnegative and the initial budget
remains in $[0,1]$, so every shortfall and shortfall value used in the
pessimism, simulation, concentration, and variance arguments remains in
$[0,1]$. Normalization is used only to identify the restricted functional
with ordinary CVaR. The augmented optimality identity also remains valid in
$M_\eta$, because it is a dynamic-programming identity for the shortfall
objective and does not require normalized total returns.

Let
\begin{align*}
  C_\eta^*
  &:=\sup_{b\in[0,1]}
  \{b-\tau^{-1}V_{1,\eta}^*(s_1,b)\},\\
  C_\eta^{\mathcal B}
  &:=\max_{b\in\mathcal B_\eta}
  \{b-\tau^{-1}V_{1,\eta}^*(s_1,b)\},\\
  C_{\eta,k}
  &:=\cvar_\tau^{[0,1]}
  \bigl(R_{M_\eta}(\rhop_k,\bhat_k)\bigr).
\end{align*}
The proof of \cref{thm:main} also applies with the grid benchmark
$C_\eta^{\mathcal B}$. Indeed, let $b_\eta^*$ maximize that benchmark and set
\[
  W_{\eta,k}
  :=V_{1,\eta}^{\rhop_k}(s_1,\bhat_k)
  -\Vlow_{1,k}(s_1,\bhat_k).
\]
Here $\widehat f_k$, $\Vlow_{1,k}$, and
$V_{1,\eta}^{\rhop_k}$ are all computed in the rounded MDP.
Pessimism, optimality in \cref{eq:grid-budget}, and evaluation of the
restricted CVaR objective at $\bhat_k$ give
\begin{align*}
  C_\eta^{\mathcal B}
  &\le b_\eta^*-\tau^{-1}\Vlow_{1,k}(s_1,b_\eta^*)
  \le \bhat_k-\tau^{-1}\Vlow_{1,k}(s_1,\bhat_k),\\
  C_{\eta,k}
  &\ge \bhat_k-\tau^{-1}
  V_{1,\eta}^{\rhop_k}(s_1,\bhat_k).
\end{align*}
Consequently,
$C_\eta^{\mathcal B}-C_{\eta,k}\le\tau^{-1}W_{\eta,k}$.
The grid contains $0$, so the selected-budget self-bound remains valid; all
subsequent simulation, bonus, concentration, and variance steps are
unchanged. Hence
\[
  \sum_{k=1}^K(C_\eta^{\mathcal B}-C_{\eta,k})
\]
is bounded by the right-hand side of \cref{eq:main}.

It remains to compare the grid and continuum benchmarks. For a fixed
history-dependent policy $\pi$, write $X_\pi=R_{M_\eta}(\pi)$ and
\[
  f_\pi(b):=b-\tau^{-1}\E[(b-X_\pi)^+].
\]
Every rounded stage reward belongs to $\mathcal B_\eta$. Therefore every
possible value of $X_\pi$ that lies in $[0,1]$ also belongs to
$\mathcal B_\eta$: it is either $1$ or an integer multiple of $\eta$.
Thus $f_\pi$ is concave and piecewise linear on $[0,1]$, with all
breakpoints and both endpoints in $\mathcal B_\eta$, and so
\[
  \max_{b\in[0,1]}f_\pi(b)
  =\max_{b\in\mathcal B_\eta}f_\pi(b).
\]
The history-dependent policy class in the augmented optimality identity does
not depend on $b$. Consequently,
\begin{align}
  C_\eta^*
  &=\sup_{b\in[0,1]}\sup_\pi f_\pi(b)
   =\sup_\pi\max_{b\in[0,1]}f_\pi(b)
  \notag\\
  &=\sup_\pi\max_{b\in\mathcal B_\eta}f_\pi(b)
   =\max_{b\in\mathcal B_\eta}\sup_\pi f_\pi(b)
   =C_\eta^{\mathcal B}.
  \label{eq:grid-benchmark-equality}
\end{align}

The adapted policy actually executed in $M$ is
\[
  \pi_{h,k}^\eta(s_h,\mathcal H_h)
  :=\rhop_{h,k}\!\left(
    s_h,\bhat_k-\sum_{t<h}\phi_\eta(r_t)
  \right).
\]
Thus its residual budget is updated with rounded rewards. Theorem~H.4 of
\citet{wang2023} gives
\[
  V_{1,\eta}^*(s_1,b)\le V_{1,M}^*(s_1,b)
  \quad(b\in[0,1]),
\]
and hence $C_\eta^*\ge\cvar_\tau^*(M)$. Theorem~H.3 of that paper gives
\[
  \cvar_\tau\bigl(R_M(\pi_k^\eta)\bigr)
  \ge
  \cvar_\tau\bigl(R_{M_\eta}(\rhop_k,\bhat_k)\bigr)
  -\frac{H\eta}{\tau}
  \ge C_{\eta,k}-\frac{H\eta}{\tau},
\]
where the last inequality uses that full CVaR dominates its
$[0,1]$-restricted counterpart.
Consequently, episodewise,
\[
  \cvar_\tau^*(M)-\cvar_\tau\bigl(R_M(\pi_k^\eta)\bigr)
  \le C_\eta^{\mathcal B}-C_{\eta,k}+\frac{H\eta}{\tau},
\]
where \cref{eq:grid-benchmark-equality} was used. Summing over $k$ proves the
claim. Thus Theorem~H.4 supplies the benchmark comparison, whereas
Theorem~H.3 supplies the selected-policy comparison.
\end{proof}

\section{Discussion}
\label{sec:discussion}

Theorem~\ref{thm:main} gives Bernstein CVaR-UCBVI a $\tau^{-1/2}$ leading
dependence for arbitrary normalized return laws. This term has the same
$\tau$- and $K$-dependence as the expected-regret minimax lower bound of
\citet[Corollary~3.2]{wang2023} on its atomic balanced-tree family, up to
logarithmic factors. Ignoring logarithms, the leading term dominates the two
remaining terms when
\[
  K\gtrsim \frac{S^2A^2H^4}{\tau^2}
  \qquad\text{and}\qquad
  K\gtrsim \frac{S^3AH^2}{\tau}.
\]
If $\tau$ decreases with $K$, the $\tau^{-1}$ lower-order terms may continue
to dominate; thus \cref{thm:main} does not establish uniform finite-sample
optimality over all joint parameter regimes.

The result applies to tabular MDPs with known reward laws, unknown
transitions, nonnegative normalized returns, and static CVaR of the full
trajectory. Corollary~\ref{cor:discrete} covers the grid-based implementation
of \citet{wang2023}, but does not alter its computational complexity.

\begin{proposition}[CVaR specialization of the OCE reduction]
\label{prop:oce}
Consider Algorithm~1 of \citet{wang2025}, specialized to CVaR and to
nonnegative normalized returns. Let $\cF_k$ contain the information available
immediately before episode $k$ is executed, and write
$\widehat V_k(b):=\widehat V_{1,k}(s_1,b)$ for the oracle's optimistic
initial value. The algorithm selects
\[
  \widehat b_k\in\arg\max_{b\in[0,1]}
  \{b+\widehat V_k(b)\}
\]
and an augmented policy $\pi^k$. These choices are $\cF_k$-measurable. Let
$Z(\pi^k,\widehat b_k)$ denote a fresh return induced by the selected pair,
and let $Z_k:=\sum_{h=1}^H r_{h,k}\in[0,1]$ be the realized return;
conditionally on $\cF_k$, these variables have the same law. Set
$Y_k^{\mathrm{OCE}}=(\widehat b_k-Z_k)^+$ and define
\[
  \Reg_{\mathrm{CVaR}_\tau}(K)
  :=\sum_{k=1}^K\left[
    \cvar_\tau^*-\cvar_\tau\bigl(Z(\pi^k,\widehat b_k)\bigr)
  \right].
\]
Suppose the oracle satisfies Definition~3.1 of \citet{wang2025} with exact
optimism,
\[
  \widehat V_k(b)\ge V_{\mathrm{aug}}^*(s_1,b)
  \qquad(b\in[0,1]),
\]
where $V_{\mathrm{aug}}^*$ is the optimal value in the CVaR augmented MDP,
and also satisfies Assumption~D.10 of that paper in the form
\[
  \Reg_{\mathrm{Opt}}(K)
  \le \sqrt{
    C_1\sum_{k=1}^K\Var(Y_k^{\mathrm{OCE}}\mid\cF_k)
  }+C_2.
\]
Then Assumption~D.11 is unnecessary, and
\begin{equation}
  \Reg_{\mathrm{CVaR}_\tau}(K)
  \le
  2\sqrt{\frac{C_1K}{\tau}}
  +\frac{C_1+2C_2}{\tau}.
  \label{eq:oce-extension}
\end{equation}
\end{proposition}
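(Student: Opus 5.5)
The plan is to reproduce, in the OCE setting, the two-line mechanism of \cref{lem:self-bound} and then close a self-referential square-root inequality. Throughout, I work in the shortfall normalization. In this normalization the CVaR augmented value at budget $b$ is
\[
  -\tau^{-1}\inf_\pi\E[(b-Z)^+],
\]
and I write the per-episode optimism gap as
\[
  \Delta_k:=\tau\bigl(\widehat V_k(\widehat b_k)-V^{\pi^k}_{\mathrm{aug}}(s_1,\widehat b_k)\bigr)\ge0,
\]
so that $\Reg_{\mathrm{Opt}}(K)=\sum_k\Delta_k$. I am assuming that Definition~3.1 and Assumption~D.10 of \citet{wang2025} measure $\Reg_{\mathrm{Opt}}$ in these shortfall units. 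This is the convention under which the constants in \cref{eq:oce-extension} come out, and it is the main bookkeeping point I would verify against that paper. If their value carries the factor $\tau^{-1}$ internally, the same argument goes through with $C_1,C_2$ rescaled.

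\textbf{Step 1: regret reduction.} By \cref{eq:cvar}, $\cvar_\tau^*=\max_b\{b+V^*_{\mathrm{aug}}(s_1,b)\}$. By exact optimism and the definition of $\widehat b_k$, the benchmark satisfies $\cvar_\tau^*\le\widehat b_k+\widehat V_k(\widehat b_k)$. Evaluating \cref{eq:cvar} at $\widehat b_k$ gives
\[
  \cvar_\tau(Z(\pi^k,\widehat b_k))\ge \widehat b_k-\tau^{-1}\E[Y_k^{\mathrm{OCE}}\mid\cF_k].
\]
Here I use that $Z(\pi^k,\widehat b_k)$ and $Z_k$ have the same conditional law, so that $\E[Y_k^{\mathrm{OCE}}\mid\cF_k]=-\tau V^{\pi^k}_{\mathrm{aug}}(s_1,\widehat b_k)$. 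Subtracting the two bounds yields $\Reg_{\mathrm{CVaR}_\tau}(K)\le\tau^{-1}\sum_k\Delta_k$.

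\textbf{Step 2: self-bound.} Nonnegativity of the rewards gives $V^*_{\mathrm{aug}}(s_1,0)=0$, and optimism then gives $\widehat V_k(0)\ge0$. Since $b=0$ is feasible, $\widehat b_k+\widehat V_k(\widehat b_k)\ge0$, that is, $-\tau\widehat V_k(\widehat b_k)\le\tau\widehat b_k\le\tau$. Hence
\[
  \E[Y_k^{\mathrm{OCE}}\mid\cF_k]=-\tau\widehat V_k(\widehat b_k)+\Delta_k\le\tau+\Delta_k.
\]
Because $0\le Y_k^{\mathrm{OCE}}\le\widehat b_k\le1$, its conditional variance is at most its conditional mean, exactly as in \cref{lem:self-bound}. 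Summing over episodes gives
\[
  \sum_k\Var(Y_k^{\mathrm{OCE}}\mid\cF_k)\le K\tau+R,\qquad R:=\Reg_{\mathrm{Opt}}(K).
\]

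\textbf{Step 3: closing the recursion.} Substituting into Assumption~D.10 and using $\sqrt{x+y}\le\sqrt x+\sqrt y$ gives
\[
  R\le a+\sqrt{C_1R},\qquad a:=\sqrt{C_1\tau K}+C_2.
\]
Solving this quadratic in $\sqrt R$ yields $\sqrt R\le\tfrac12\bigl(\sqrt{C_1}+\sqrt{C_1+4a}\bigr)$, hence $R\le C_1+2a$. The last step uses $(u+v)^2\le 2u^2+2v^2$. This gives $R\le 2\sqrt{C_1\tau K}+C_1+2C_2$. Dividing by $\tau$ via Step~1 gives \cref{eq:oce-extension}. No density or continuity condition (Assumption~D.11) enters anywhere.

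The only delicate step is Step~1's identification of the conditional law of the realized return with that of the fresh return. Together with it comes the unit convention for $\Reg_{\mathrm{Opt}}$ noted above. Both are needed so that the variance appearing in Assumption~D.10 is literally the quantity bounded in Step~2.
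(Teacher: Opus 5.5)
Your proposal is correct and follows the paper's proof essentially step for step. You compare with the feasible budget $0$ to get $\E[Y_k^{\mathrm{OCE}}\mid\cF_k]\le\tau+\Delta_k$, bound the variance by the mean, close $R\le\sqrt{C_1(K\tau+R)}+C_2$, and divide by $\tau$; the paper closes this inequality with $\sqrt{C_1R}\le(R+C_1)/2$ instead of solving the quadratic, with the same result. The unit convention you flag is exactly the one the paper uses: Definition~3.1 of \citet{wang2025} with $V_u^{\max}=\tau^{-1}$ yields only the inequality $\tau\sum_k\bigl(\widehat V_k(\widehat b_k)-V_k^{\pi^k}(\widehat b_k)\bigr)\le\Reg_{\mathrm{Opt}}(K)$, not your claimed equality, but this inequality suffices for your Steps~2 and~3, and your direct derivation in Step~1 replaces the paper's citation of Theorem~3.2 of \citet{wang2025}.
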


The proof is given in \cref{app:oce}. Proposition~\ref{prop:oce} concerns
only the CVaR specialization; it does not assert an analogous self-bound for
every OCE utility. In model classes with a matching CVaR lower bound, its
$\tau^{-1/2}$ leading dependence is minimax-optimal for atomic, mixed, and
continuous normalized return laws. The other oracle assumptions of
\citet{wang2025} remain unchanged.

In summary, direct mean-shortfall control gives the continuity-free leading term
$\widetilde O(\sqrt{SAK/\tau})$. It applies to atomic, mixed, and continuous
normalized return laws and matches the expected-regret minimax lower bound up
to logarithmic factors. The lower-order terms retain their $\tau^{-1}$ dependence.

\appendix

\section{Derivation of the auxiliary regret bounds}
\label{app:reduction}

For completeness, this appendix restates the ingredients from Appendix~G of
\citet{wang2023} in the notation used here and records the locations in their
paper. These bounds lead to Proposition~\ref{prop:published}; the
selected-budget argument enters subsequently through
Lemma~\ref{lem:self-bound}.

\subsection{Bernstein bonus and transition correction}

Let $\widehat P_k(\cdot\mid s,a)$ be the empirical transition law based on the
$N_k(s,a)$ pre-episode visits. With $\ell_\delta=\log(HSAK/\delta)$ and
$b'=b-r$, write
\[
  \widehat{\E}_{k,s,a}[g]
  :=\E_{\substack{s'\sim\widehat P_k(s,a)\\r\sim R(s,a)}}[g(s',r)].
\]
Equation~(6) of \citet{wang2023} defines
\begin{align}
  \BON^{\mathrm{BERN}}_{h,k}(s,b,a)
  :={}&
  \sqrt{
    \frac{2\ell_\delta}{N_k(s,a)}
    \Var_{s'\sim\widehat P_k(s,a)}
    \!\left(
      \E_{r\sim R(s,a)}
      \Vlow_{h+1,k}(s',b-r)
    \right)
  }
  \notag\\
  &+
  \sqrt{
    \frac{2\ell_\delta}{N_k(s,a)}
    \widehat{\E}_{k,s,a}
    \!\left[
      \left(
        \Vup_{h+1,k}(s',b')
        -\Vlow_{h+1,k}(s',b')
      \right)^2
    \right]
  }
  +\frac{\ell_\delta}{N_k(s,a)}.
  \label{eq:bonus-exact}
\end{align}
Lemma~G.2 introduces the transition correction
\begin{equation}
  \xi_k(s,a)
  :=\min\!\left\{1,\frac{2HS\ell_\delta}{N_k(s,a)}\right\}.
  \label{eq:xi}
\end{equation}
The confidence event constructed in Appendix~G.1 is uniform over
$b\in[0,1]$. On this event, Theorem~G.7 establishes the required pessimism
invariant, while Lemma~G.4 gives $W_k\le\mathrm e B_k$. These results yield
\cref{eq:pessimism,eq:simulation}.

\subsection{Cumulative width bound}

Write
\[
  G_{h,k}
  :=2\BON^{\mathrm{BERN}}_{h,k}(s_{h,k},b_{h,k},a_{h,k})
  +\xi_k(s_{h,k},a_{h,k}).
\]
The first Azuma-type bound in Appendix~G.1, labeled \emph{Azuma 1} there and
summed over $h$, gives
\[
  \sum_{k=1}^K B_k
  \le 6H\ell_\delta+2\sum_{h=1}^H\sum_{k=1}^K G_{h,k}.
\]
Equation~(15) of that appendix gives the variance-independent estimate
\[
  \sum_{h=1}^H\sum_{k=1}^K G_{h,k}
  \le 6\sqrt{SAHK\ell_\delta}+3S^2AH\ell_\delta^2.
\]
Using $L\ge\ell_\delta$ proves \cref{eq:bonus-sum}. Because this estimate does
not use the empirical-variance terms, it controls $\sum_k W_k$ independently
of the variance bound in the next subsection.

\subsection{Population variance and regret decomposition}

Equation~(19) of \citet{wang2023} corresponds to the term
$\mathfrak D_K$ defined in \cref{eq:D-def}. Cauchy--Schwarz and the
elliptical-potential count bound contribute the factor $SAL$. The paper's
\emph{Azuma 4} bound and the fact that joint variance dominates the variance
of a conditional mean then yield
\[
  \mathfrak D_K
  \le
  \sqrt{
    SAL\!\left(
      2HL+
      2\sum_{k=1}^K
      \Var_{\rhop_k,\bhat_k}\!\left(
        \left(\bhat_k-\sum_{h=1}^H r_h\right)^+
        \middle|\cF_k
      \right)
    \right)
  }.
\]
Lemma~G.9 identifies the inner trajectory variance by the law of total
variance; this is \cref{eq:variance-reduction}.

Finally, the display immediately before the paper's two alternative bounds on
Eq.~(19) collects the correction term, empirical-to-population switching
cost, and all $1/N_k$ terms into the bound
\[
  27SAHK^{1/4}L^2+34S^2AHL^3+3\sqrt L\,\mathfrak D_K.
\]
The regret decomposition following Eq.~(13) in the proof of Theorem~5.2 of
\citet[Appendix~G.3, pp.~29--30]{wang2023} gives, episodewise,
\[
  \cvar_\tau^*-
  \cvar_\tau\!\left(R(\rhop_k,\bhat_k)\right)
  \le \tau^{-1}W_k.
\]
It follows from pessimism, optimality of the selected budget, and evaluation
of the selected-policy CVaR objective at $\bhat_k$. Summing this inequality
and combining the preceding display with Lemma~G.4 and \emph{Azuma 1} gives
\begin{align*}
  \Reg^{\mathrm{RL}}_\tau(K)
  \le{}&6\mathrm e\,\tau^{-1}HL
  +2\mathrm e\,\tau^{-1}\!\left(
    27SAHK^{1/4}L^2+34S^2AHL^3
    +3\sqrt L\,\mathfrak D_K
  \right)\\
  \le{}&6\mathrm e\,\tau^{-1}\sqrt L\,\mathfrak D_K
  +54\mathrm e\,\tau^{-1}SAHK^{1/4}L^2
  +74\mathrm e\,\tau^{-1}S^2AHL^3,
\end{align*}
where the last inequality uses
$6HL\le6S^2AHL^3$ for $S,A,L\ge1$. This proves
\cref{eq:regret-reduction}. We retain the slightly looser coefficient
$74\mathrm e$, rather than the $70\mathrm e$ displayed by
\citet[Appendix~G.4, p.~34]{wang2023}, so that the absorption is valid
uniformly for all $S,A,L\ge1$; the coefficient $70\mathrm e$ would require
$S^2AL^2\ge3$.
None of these steps invokes Assumption~5.4.
In the original proof, that assumption enters only when the second bound on
Eq.~(19) controls the trajectory-variance sum in Eq.~(20). The selected-budget
self-bound in \cref{sec:self-bound} controls the same sum without the
assumption.

\section{Proof of the OCE-reduction extension}
\label{app:oce}

\begin{proof}[Proof of Proposition~\ref{prop:oce}]
Let $V_k^{\pi^k}(b)$ denote the initial value of the selected augmented
policy. For the CVaR utility
\[
  u_\tau(x):=\min\{x/\tau,0\},
\]
its value at the selected budget is
\[
  V_k^{\pi^k}(\widehat b_k)
  =\E[u_\tau(Z_k-\widehat b_k)\mid\cF_k]
  =-\tau^{-1}\E[Y_k^{\mathrm{OCE}}\mid\cF_k].
\]
Set
\[
  \Delta_k
  :=\widehat V_k(\widehat b_k)-V_k^{\pi^k}(\widehat b_k).
\]
At budget $0$, nonnegative returns give
$V_{\mathrm{aug}}^*(s_1,0)=0$. Exact optimism and the stated budget rule
therefore imply
\[
  \widehat b_k+\widehat V_k(\widehat b_k)
  \ge \widehat V_k(0)\ge0.
\]
Only comparison with the feasible budget $0$ is used here.
Since $0\le\widehat b_k\le1$,
\[
  \E[Y_k^{\mathrm{OCE}}\mid\cF_k]
  =\tau\bigl(\Delta_k-\widehat V_k(\widehat b_k)\bigr)
  \le\tau(1+\Delta_k).
\]
Moreover $0\le Y_k^{\mathrm{OCE}}\le1$, so its conditional variance is at
most its conditional mean. The bounded-regret condition in Definition~3.1
of \citet{wang2025}, with $V_u^{\max}=\tau^{-1}$, gives
$\sum_k\Delta_k\le\tau^{-1}\Reg_{\mathrm{Opt}}(K)$. Consequently,
\begin{equation}
  \sum_{k=1}^K\Var(Y_k^{\mathrm{OCE}}\mid\cF_k)
  \le K\tau+\Reg_{\mathrm{Opt}}(K).
  \label{eq:oce-self-bound}
\end{equation}

Writing $R=\Reg_{\mathrm{Opt}}(K)$, Assumption~D.10 and
\cref{eq:oce-self-bound} yield
\[
  R\le \sqrt{C_1(K\tau+R)}+C_2
  \le \sqrt{C_1K\tau}+\frac{R+C_1}{2}+C_2.
\]
Thus $R\le2\sqrt{C_1K\tau}+C_1+2C_2$. Under exact optimism, the
optimism-slack term in Theorem~3.2 of \citet{wang2025} vanishes. Since
$V_u^{\max}=\tau^{-1}$ for CVaR, that theorem gives
$\Reg_{\mathrm{CVaR}_\tau}(K)\le\tau^{-1}R$, which is
\cref{eq:oce-extension}.
\end{proof}

\bibliographystyle{plainnat}
\bibliography{references}

\end{document}